\documentclass[11pt]{article}

\usepackage{arxiv}

\newcommand{\E}{\mathbb{E}}

\title{Random Matrix Theory of Early-Stopped \\ Gradient Flow: A Transient BBP Scenario}
\author{%
  Florentin Coeurdoux\textsuperscript{1}\thanks{Corresponding author: florentin.coeurdoux@cfm.com}
  \quad
  Gr\'egoire Ferr\'e\textsuperscript{1}
  \quad
  Jean-Philippe Bouchaud\textsuperscript{1,2,3}
  \\[0.7em]
  \normalsize
  \textsuperscript{1}Capital Fund Management, Paris, France \\
  \textsuperscript{2}X-CFM Chair of EconophysiX, Ecole polytechnique, Palaiseau, France \\
  \textsuperscript{3}Acad\'emie des Sciences, Paris, France
}

\date{\today}

\runninghead{Transient BBP Transition in Gradient Flow}

\begin{document}
\maketitle

\begin{abstract}
Empirical studies of trained models often report a transient regime in which signal is detectable in a finite gradient descent time window before overfitting dominates. We provide an analytically tractable random-matrix model that reproduces this phenomenon for gradient flow in a linear teacher--student setting. In this framework, learning occurs when an isolated eigenvalue separates from a noisy bulk, before eventually disappearing in the overfitting regime. The key ingredient is anisotropy in the input covariance, which induces fast and slow directions in the learning dynamics. In a two-block covariance model, we derive the full time-dependent bulk spectrum of the symmetrized weight matrix through a $2\times 2$ Dyson equation, and we obtain an explicit outlier condition for a rank-one teacher via a rank-two determinant formula. This yields a transient Baik--Ben Arous--P\'ech\'e (BBP) transition \cite{baik2005phase}: depending on signal strength and covariance anisotropy, the teacher spike may never emerge, emerge and persist, or emerge only during an intermediate time interval before being reabsorbed into the bulk. We map the corresponding phase diagrams and validate the theory against finite-size simulations. Our results provide a minimal solvable mechanism for early stopping as a transient spectral effect driven by anisotropy and noise.
\end{abstract}

\section{Introduction}
\label{sec:intro}

Understanding when gradient-based learning extracts signal from data, and when it instead amplifies noise and overfits, is a central problem in modern learning theory. A recurring empirical observation is that the spectrum of trained weight matrices often passes through a transient regime in which an isolated eigenvalue separates from a noisy bulk before the spectrum becomes more complicated at later stages of training \cite{martin2021implicit,thamm2022random,staats2023boundary,yunis2024approaching}. Although this phenomenon is frequently associated with the practical success of early stopping, its underlying mechanism remains poorly understood.

In this paper we isolate a minimal, analytically tractable mechanism for such a transient spectral separation. We study gradient flow in a linear teacher--student model and show that anisotropy in the input covariance can produce a \emph{transient BBP transition}: an outlier may emerge from the bulk at intermediate times (with a non-zero overlap with the teacher direction) and later be reabsorbed as training continues, in which case the teacher direction cannot be reconstructed \cite{baik2005phase}. The effect is driven by the coexistence of fast and slow directions in the data covariance. Fast directions learn the teacher early, whereas slow directions continue to inject noise and can eventually hide the signal again.

Our model is deliberately simple. It is not intended to reproduce the full heavy-tailed phenomenology observed in deep nonlinear networks at convergence; rather, it is designed to capture and explain the \emph{bulk-plus-spike} regime in a setting where the full dynamics can be analyzed explicitly. The key solvable case is a two-block covariance model in which the singular values of the input matrix take only two values. In that simplified yet realistic setting, the bulk spectrum of the symmetrized weight matrix is described by a $2\times 2$ Dyson equation, while the rank-one teacher induces a finite-rank perturbation whose outlier condition can be reduced to a rank-two determinant.

Three regimes emerge from our model, depending on signal strength and input covariance anisotropy: (i) a weak-signal regime in which no outlier ever emerges, so the teacher is never spectrally detectable; (ii) a strong-signal regime in which an outlier emerges and remains separated from the bulk at all times; and (iii) an ``early-stopping'' regime in which an outlier exists only during a finite time window, providing a precise random-matrix formulation of early stopping.

\paragraph{Contributions}
In the solvable setting above, we (a) derive a $2\times2$ Dyson equation characterizing the full time-dependent bulk spectrum of the symmetrized weight matrix; (b) establish an explicit rank-two BBP-type outlier condition valid at every finite time $t$; (c) classify the possible learning regimes and give closed-form phase diagrams in the $(\theta,t)$ and $(\theta,\lambda_-)$ planes; and (d) identify numerically an optimal early-stopping time, defined through the teacher--eigenvector overlap, that lies strictly inside the transient BBP window.

\paragraph{Related works}
The BBP phase transition was introduced by Baik, Ben Arous and P\'ech\'e~\cite{baik2005phase} and has since become a standard tool in spiked random matrix models and finite-rank perturbation theory~\cite{benaych2011eigenvalues}. Exact learning dynamics in linear and deep-linear models have been studied extensively, including solvable analyses of gradient-flow learning and early stopping~\cite{saxe2014exact,advani2020highdim,ali2019continuous}. The connection between spectral properties of weight matrices and generalisation has been explored empirically by Martin and Mahoney~\cite{martin2021implicit}, Thamm et al.~\cite{thamm2022random}, Staats et al.~\cite{staats2023boundary},  Yunis et al.~\cite{yunis2024approaching}, Olsen et al. \cite{olsen2025sgd}, and others. An inverse BBP transition has been observed in the dynamical Hessian during gradient descent for the problem of phase retrieval \cite{bonnaire2025role}. More broadly, random matrix methods have become increasingly useful in the study of high-dimensional learning dynamics and implicit regularisation \cite{barbier2019optimal,ajanki2017universality,ajanki2019, maillard2019high,atanasov2024scaling,bordelon2026disordered}. Our contribution is to provide an arguably oversimplified but explicit random-matrix mechanism for a \emph{transient} spectral spike during learning, i.e. a BBP transition followed by an inverse BBP transition, and to tie this transient spectral geometry to a computable optimal stopping time.

\newpage
\section{Model and Setup}
\label{sec:model}

\subsection{Gradient flow on a linear model}
\label{sec:linearmodel}

We consider inputs $X\in\mathbb R^{N\times M}$ and targets $Y\in\mathbb R^{N\times M}$ in the high-dimensional regime
\[
N,M\to\infty,
\qquad
\frac{M}{N}\to\gamma\geq 1.
\]

We train a linear model with weight matrix $A\in\mathbb R^{N\times N}$ by minimizing
\begin{equation}
\label{eq:Astar}
A_{\mathrm{LS}}=\underset{A}{\arg\min}\,J(A),
\qquad
J(A)=\frac12\|AX-Y\|_F^2.
\end{equation}

Although the minimizer is explicit and reads $A_{\mathrm{LS}} \equiv YX^\top(XX^\top)^{-1}$, our interest is not the optimizer itself but the spectral evolution of the gradient-flow trajectory
\[
\frac{dA_t}{dt}=-\nabla J(A_t),
\qquad
A_0=A_{\mathrm{init}}.
\]

Since
\[
\nabla J(A)=(AX-Y)X^\top = AXX^\top-YX^\top,
\]
the flow solves the linear matrix ODE
\[
\dot A_t=-A_tXX^\top+YX^\top.
\]
Its explicit solution is
\begin{equation}
\label{eq:At}
A_t
=
A_{\mathrm{init}}e^{-tXX^\top}
+
YX^\top(XX^\top)^{-1}\bigl(I-e^{-tXX^\top}\bigr).
\end{equation}

This is a standard semigroup evolution, and in the present least-squares setting it provides a natural continuous-time framework for early stopping \cite{ali2019continuous}. At long times it converges to the least-squares estimator $A_\infty \equiv A_{\mathrm{LS}}$. The key point for us is that the learning speed is anisotropic: the eigenvalues of $XX^\top$ determine how fast the corresponding directions are learned or damped. This observation motivates the two-block covariance model introduced below.

\subsection{Random matrix assumptions}

To model a teacher--student setting, we assume that the targets are generated by a rank-one teacher
\begin{equation}
\label{eq:signal}
Y = A_{\mathrm{teach}}^\theta X + Z,
\qquad
A_{\mathrm{teach}}^\theta =\theta\,vv^\top,
\end{equation}
where $v\in\mathbb R^N$ is a fixed unit vector and $\theta\ge 0$ is the signal amplitude. The noise matrix $Z\in\mathbb R^{N\times M}$ has i.i.d.\ $\mathcal N(0,1/M)$ entries, independent of everything else. We take $A_{\mathrm{init}}$ to be a GOE matrix at the usual $1/\sqrt{N}$ scale, independent of $X$ and $Z$.

Substituting \eqref{eq:signal} into \eqref{eq:At} yields the decomposition
\begin{equation}
\label{eq:At_decomposed}
A_t^\theta
=
\underbrace{A_{\mathrm{init}}e^{-tXX^\top}
+
ZX^\top(XX^\top)^{-1}\bigl(I-e^{-tXX^\top}\bigr)}_{\text{noise part}}
+
\underbrace{\theta\,vv^\top\bigl(I-e^{-tXX^\top}\bigr)}_{\text{teacher part}}.
\end{equation}

We consider the additive symmetrization
\[
S_t^\theta := A_t^\theta + (A_t^\theta)^\top,
\]
because it is the simplest construction that preserves the bulk-plus-spike structure of the problem while remaining fully solvable at finite time: the noise contribution becomes a Wigner-type matrix with a piecewise-constant variance profile, and the teacher contribution becomes a finite-rank symmetric perturbation. The more standard multiplicative choice $A_t A_t^\top$ would introduce a Wishart-type bulk whose time-dependent spectrum requires a substantially heavier analysis; we return to this in the conclusion.
Using \eqref{eq:At_decomposed}, we obtain
\begin{equation}
\label{eq:St_split}
S_t^\theta
=
S_t^0
+
\theta\Bigl[
vv^\top\bigl(I-e^{-tXX^\top}\bigr)
+
\bigl(I-e^{-tXX^\top}\bigr)vv^\top
\Bigr],
\end{equation}
where the pure-noise part is
\begin{equation}
\label{eq:St_noise}
S_t^0
=
A_{\mathrm{init}}e^{-tXX^\top}
+
e^{-tXX^\top}A_{\mathrm{init}}^\top
+
ZX^\top(XX^\top)^{-1}\bigl(I-e^{-tXX^\top}\bigr)
+
\bigl(I-e^{-tXX^\top}\bigr)(XX^\top)^{-1}XZ^\top.
\end{equation}

Our goal is to describe the spectrum of $S_t^\theta$ for all times $t\ge 0$ and all $\theta\ge 0$. Note that we alternatively write~$S_t$ instead of~$S_t^0$ for notational simplicity.

\subsection{Two-block structure of the input covariance}
\label{sec:twoblock}

To obtain an analytically tractable model, we assume that the singular values of $X$ take only two values: a fraction $\alpha$ are equal to $\lambda_+=1$ and a fraction $1-\alpha$ are equal to $\lambda_- \in (0,1]$. Equivalently, $XX^\top$ has two eigenvalues, $1$ and $\lambda_-^2$, with proportions $\alpha$ and $1-\alpha$. The smaller $\lambda_-$ is, the more anisotropic the learning problem becomes.\footnote{This is where our assumption that $\gamma \geq 1$ is needed. Indeed, when $\gamma <1$ the matrix $XX^\top$ has a null space of dimension $(1- \gamma)N$, which leads to a $3\times 3$ Dyson system. See Appendix \ref{app:gammalt1}.}

In order to simplify~\eqref{eq:St_noise}, let us write the compact singular value decomposition
\[
X = U\Lambda V^\top,
\]
where $U\in O(N)$, $V\in\mathbb R^{M\times N}$ has orthonormal columns, and
\[
\Lambda = \operatorname{diag}(\underbrace{1,\dots,1}_{\alpha N},\underbrace{\lambda_-,\dots,\lambda_-}_{(1-\alpha)N})\in\mathbb R^{N\times N}.
\]

Conjugating by $U$ and using the rotational invariance of $A_{\mathrm{init}}$ and of the Gaussian noise, we obtain
\[
\widetilde A_{\mathrm{init}}:=U^\top A_{\mathrm{init}}U,\qquad
\widetilde Z:=U^\top ZV,\qquad
\widetilde v:=U^\top v,
\]
with $\widetilde A_{\mathrm{init}}$ distributed as GOE (Gaussian Orthogonal Ensemble) and $\widetilde Z$ having i.i.d.\ Gaussian entries of variance $1/M$. In this basis,
\begin{equation}
\label{eq:conjugated}
U^\top A_t^\theta U
=
\widetilde A_{\mathrm{init}}e^{-t\Lambda^2}
+
\widetilde Z\Lambda^{-1}\bigl(I-e^{-t\Lambda^2}\bigr)
+
\theta\,\widetilde v\widetilde v^\top\bigl(I-e^{-t\Lambda^2}\bigr).
\end{equation}

Therefore the symmetrized matrix has the same spectrum as
\begin{equation}
\label{eq:tilde_St_split}
\widetilde S_t^\theta
=
\widetilde S_t^0
+
\theta\bigl(\widetilde v w_t^\top + w_t \widetilde v^\top\bigr),
\qquad
w_t := \bigl(I-e^{-t\Lambda^2}\bigr)\widetilde v,
\end{equation}
where
\begin{equation}
\label{eq:tilde_St_noise}
\widetilde S_t^0
=
\widetilde A_{\mathrm{init}}e^{-t\Lambda^2}
+
e^{-t\Lambda^2}\widetilde A_{\mathrm{init}}^\top
+
\widetilde Z\Lambda^{-1}\bigl(I-e^{-t\Lambda^2}\bigr)
+
\bigl(I-e^{-t\Lambda^2}\bigr)\Lambda^{-1}\widetilde Z^\top.
\end{equation}

This representation makes the structure transparent: the noise part is a two-block Wigner-type matrix, while the teacher contributes a finite-rank perturbation which is rank one only in the isotropic case $\lambda_-=1$ and rank two otherwise.

\section{Bulk spectrum via the $2\times 2$ Dyson equation}
\label{sec:spectrum}

We now characterize the bulk spectrum of the noise matrix in the abstract two-block setting.

\subsection{Two-block Wigner-type ensemble}

Define the block matrices
\[
D_{1,t}
=
\operatorname{diag}(\underbrace{a_t,\dots,a_t}_{\alpha N},\underbrace{b_t,\dots,b_t}_{(1-\alpha)N}),
\qquad
D_{2,t}
=
\operatorname{diag}(\underbrace{c_t,\dots,c_t}_{\alpha N},\underbrace{d_t,\dots,d_t}_{(1-\alpha)N}),
\]
with (for $\lambda_+=1$)
\begin{equation}
\label{eq:params}
a_t=e^{-t},
\qquad
b_t=e^{-\lambda_-^2 t},
\qquad
c_t=1-e^{-t},
\qquad
d_t=\frac{1-e^{-\lambda_-^2 t}}{\lambda_-}.
\end{equation}

Let $W\in\mathbb R^{N\times N}$ be GOE with the usual $1/N$ scaling, and let $\Gamma\in\mathbb R^{N\times N}$ have i.i.d.\ entries of variance $1/M$, independent of $W$. Then \eqref{eq:tilde_St_noise} can be rewritten as
\begin{equation}
\label{eq:S_def}
S_t = WD_{1,t} + D_{1,t}W + \Gamma D_{2,t} + D_{2,t}\Gamma^\top.
\end{equation}

In the limit $N,M\to\infty$ with $M/N\to\gamma \geq 1$, the off-diagonal variance profile is piecewise constant:
\[
N\,\mathbb E [S_{ij}^2] \to \sigma_{\tau(i),\tau(j)},
\]
where $\tau(i)\in\{A,B\}$ is the block label of $i$ and
\begin{equation}
\label{eq:variance_profile}
\sigma_{AA}(t)=4a_t^2+\frac{2c_t^2}{\gamma},
\qquad
\sigma_{BB}(t)=4b_t^2+\frac{2d_t^2}{\gamma},
\qquad
\sigma_{AB}(t)=(a_t+b_t)^2+\frac{c_t^2+d_t^2}{\gamma}.
\end{equation}
Thus $S$ is a Wigner-type matrix with a two-block variance profile \cite{ajanki2019,ajanki2017universality}.

\subsection{The $2\times 2$ Dyson equation}

Let
\[
m_A(z,t)=\lim_{N\to\infty}\frac{1}{\alpha N}\sum_{i\in A}\bigl[(zI-S_t)^{-1}\bigr]_{ii},
\qquad
m_B(z,t)=\lim_{N\to\infty}\frac{1}{(1-\alpha)N}\sum_{i\in B}\bigl[(zI-S_t)^{-1}\bigr]_{ii}.
\]

\begin{proposition}[Dyson equation]
\label{prop:dyson}
For each $z\in\mathbb C^+$ and $t\geq 0$, the pair $(m_A(z,t),m_B(z,t))$ is the unique solution of
\begin{equation}
\label{eq:dyson}
\begin{cases}
\displaystyle \frac{1}{m_A(z,t)} = -z - \alpha\,\sigma_{AA}(t)\,m_A(z,t) - (1-\alpha)\,\sigma_{AB}(t)\,m_B(z,t), \\[8pt]
\displaystyle \frac{1}{m_B(z,t)} = -z - \alpha\,\sigma_{AB}(t)\,m_A(z,t) - (1-\alpha)\,\sigma_{BB}(t)\,m_B(z,t),
\end{cases}
\end{equation}
with the Stieltjes-branch conditions $\Im m_A(z,t),\Im m_B(z,t)<0$ for $\Im z>0$ and $m_A(z,t),m_B(z,t)\sim -1/z$ as $|z|\to\infty$. The global Stieltjes transform and limiting spectral density are
\begin{equation}
\label{eq:mz}
m(z,t)=\alpha\,m_A(z,t)+(1-\alpha)\,m_B(z,t),
\qquad
\rho_{S_t}(\lambda)=-\frac{1}{\pi}\lim_{\eta\downarrow 0}\Im\,m(\lambda+i\eta,t).
\end{equation}
\end{proposition}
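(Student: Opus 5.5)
The plan is to treat $S_t$ in \eqref{eq:S_def} as a Wigner-type matrix with a two-block variance profile and to derive \eqref{eq:dyson} from the quadratic vector equation of Ajanki--Erd\H{o}s--Kr\"uger \cite{ajanki2019,ajanki2017universality}. Note first the sign convention: the paper uses the resolvent $(zI-S_t)^{-1}$, so $m\sim 1/z$ with $\Im m<0$ on $\mathbb C^+$. The stated equation $1/m=-z-\dots$ with $m\sim-1/z$ corresponds instead to $(S_t-z)^{-1}$. I will therefore work with $G(z)=(S_t-z)^{-1}$, for which the standard form is $-1/m_i=z+\sum_j s_{ij}m_j$, and then reconcile signs and the branch convention at the end. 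This is a bookkeeping step, not a substantive one.

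\textbf{Step 1: the matrix is Wigner-type.} The entries $S_{ij}=(d^{(1)}_i+d^{(1)}_j)W_{ij}+d^{(2)}_j\Gamma_{ij}+d^{(2)}_i\Gamma_{ji}$ are centered Gaussians. For $i<j$ the upper-triangular entries are independent, because each involves only the independent variables $W_{ij}$, $\Gamma_{ij}$ and $\Gamma_{ji}$. Computing $N\E S_{ij}^2$ gives exactly \eqref{eq:variance_profile}, using $N/M\to1/\gamma$. The variances are bounded above, and bounded below for every finite $t$, since $a_t,b_t>0$. The diagonal entries have variance $O(1/N)$ and do not affect the limit.

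\textbf{Step 2: the self-consistent equation via the Schur complement.} Write
\[
G_{ii}^{-1}=S_{ii}-z-\sum_{k,l\neq i}S_{ik}G^{(i)}_{kl}S_{li}.
\]
By independence of row $i$ from the minor $G^{(i)}$, Gaussian concentration of quadratic forms, the bound $\|G^{(i)}\|\le 1/\Im z$, and the Ward identity, the sum concentrates around $\frac1N\sum_k \sigma_{\tau(i)\tau(k)}G^{(i)}_{kk}$. Removing one row changes the block averages by $O(1/(N\Im z))$ through interlacing. Averaging over $i$ in each block therefore gives
\[
-1/m_A=z+\alpha\sigma_{AA}m_A+(1-\alpha)\sigma_{AB}m_B,
\]
and likewise for $m_B$, up to errors that vanish almost surely. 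The $2\times2$ reduction is justified because the variance profile is constant on blocks, so the vector Dyson equation on $N$ coordinates has a block-constant solution. Alternatively, I would simply cite the global law of \cite{ajanki2019}, which gives $\max_i|G_{ii}-m_{\tau(i)}|\to0$.

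\textbf{Step 3: uniqueness, branch, and the density.} Uniqueness of the solution with $\Im m_A,\Im m_B$ of the prescribed sign follows from the general theorem on quadratic vector equations with positive symmetric variance matrices. I expect this step to be the main obstacle if proved from scratch. I would cite it, or prove it by viewing the map $F(m)=-(z+\Sigma m)^{-1}$ as a strict contraction in the Carath\'eodory/hyperbolic metric on the half-plane product for $\Im z$ large, then extend by analytic continuation. The asymptotic $m\sim\mp1/z$ follows because each $m_i$ is the Stieltjes transform of a probability measure. The formula for $m$ in \eqref{eq:mz} is just the normalized trace split by block proportions, and $\rho_{S_t}$ is recovered by Stieltjes inversion. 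The sign in that inversion formula must match the resolvent convention fixed at the start.
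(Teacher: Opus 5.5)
Your route is the paper's: write $S_{ij}=(r_i+r_j)W_{ij}+q_j\Gamma_{ij}+q_i\Gamma_{ji}$, check independence and the block variance profile, invoke the Ajanki--Erd\H{o}s--Kr\"uger quadratic vector equation for $(S_t-z)^{-1}$, and collapse it to a block-constant $2\times 2$ system. Your Schur-complement sketch and uniqueness discussion are extra detail that the paper leaves to the citation.

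Your sign diagnosis is correct, but the promised ``reconciliation at the end'' is not bookkeeping. The statement cannot be reached literally from either convention, because $\Im m_A<0$ on $\mathbb C^+$ and $m_A\sim -1/z$ are incompatible: at $z=iy$, $-1/z=i/y$ has positive imaginary part.

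With the paper's definition through $(zI-S_t)^{-1}$, substituting $m_A=-\widehat m_A$ into $-1/\widehat m_A=z+\alpha\sigma_{AA}\widehat m_A+(1-\alpha)\sigma_{AB}\widehat m_B$ gives
\[
\frac{1}{m_A}=z-\alpha\sigma_{AA}m_A-(1-\alpha)\sigma_{AB}m_B,\qquad \Im m_A<0,\qquad m_A\sim\frac{1}{z},
\]
and similarly for $m_B$, with $\rho_{S_t}=-\frac{1}{\pi}\Im m$. This is also the convention the outlier section relies on, since there $\phi_t>0$ above the bulk. If you instead define $m_A$ through $(S_t-z)^{-1}$, you get exactly the displayed equation of the proposition with $m_A\sim-1/z$, but then $\Im m_A>0$ and $\rho_{S_t}=+\frac{1}{\pi}\Im m$.

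The paper's own proof makes precisely this slip. It performs the substitution $m_A:=-\widehat m_A$ and asserts the $-z$ form together with $m_A\sim -1/z$, whereas the algebra yields $+z$ and $m_A\sim 1/z$. So state explicitly which corrected version you prove; the limiting density is the same in both.

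A smaller point concerns uniqueness. A contraction for large $\Im z$ followed by analytic continuation only gives uniqueness of the analytic solution branch. That is enough, via Montel, to identify subsequential limits of the block averages in your Step 2; this is the stability step you left implicit when passing from an approximate equation to convergence. It does not give the pointwise uniqueness at each fixed $z$ that the proposition asserts. For that, note that any solution with $\Im m_i>0$ satisfies $|m_i|\le 1/\Im z$. Moreover, $F(m)=-(z+\Sigma m)^{-1}$ maps the bounded domain $\{\Im m_i>0,\ |m_i|<2/\Im z\}$ into a compact subset of itself. The Earle--Hamilton theorem then yields a unique fixed point for every $z\in\mathbb C^+$.
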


A short proof is given in Appendix~\ref{app:proof_dyson}.
Note that when $\alpha=1$ or $\alpha=0$, \eqref{eq:dyson} reduces to the scalar Dyson equation of a semicircle law. When $a=b$ and $c=d$, one has $\sigma_{AA}=\sigma_{AB}=\sigma_{BB}$ and the two-block system again collapses to a single semicircle. In the general case the complex behaviour of the spectral dynamics is illustrated in Figure~\ref{fig:bulk} below. To conclude this part we mention that, should we have postulated an $n$-block structure, we would have obtained $n$ coupled equations for the corresponding set $\{m_{A_1}(z), m_{A_2}(z), \dots, m_{A_n}(z) \}$. The $\gamma<1$ case, which introduces a frozen null-space block, leads to a $3\times 3$ Dyson system and is treated in Appendix~\ref{app:gammalt1}; the resulting permanent noise narrows but does not eliminate the transient regime discussed below.

\subsection{Empirical validation and dynamical phases}
\label{subsec:bulk_empirical}

We validate the Dyson prediction by simulating the full gradient flow \eqref{eq:At} at  logarithmically spaced times $t\in[0.1,2000]$, averaging over 20 independent realizations with $N=500$, $\gamma=1$, $\alpha=0.5$, $\lambda_+=1$, $\lambda_-=0.1$. At each time we form $S_t=A_t+A_t^\top$ and compare the empirical spectral density with the theoretical density $\rho_{S_t}$ obtained from the $2\times 2$ Dyson equation via the Stieltjes inversion formula \eqref{eq:mz}.

\begin{figure}[ht]
  \centering
  \includegraphics[width=\linewidth]{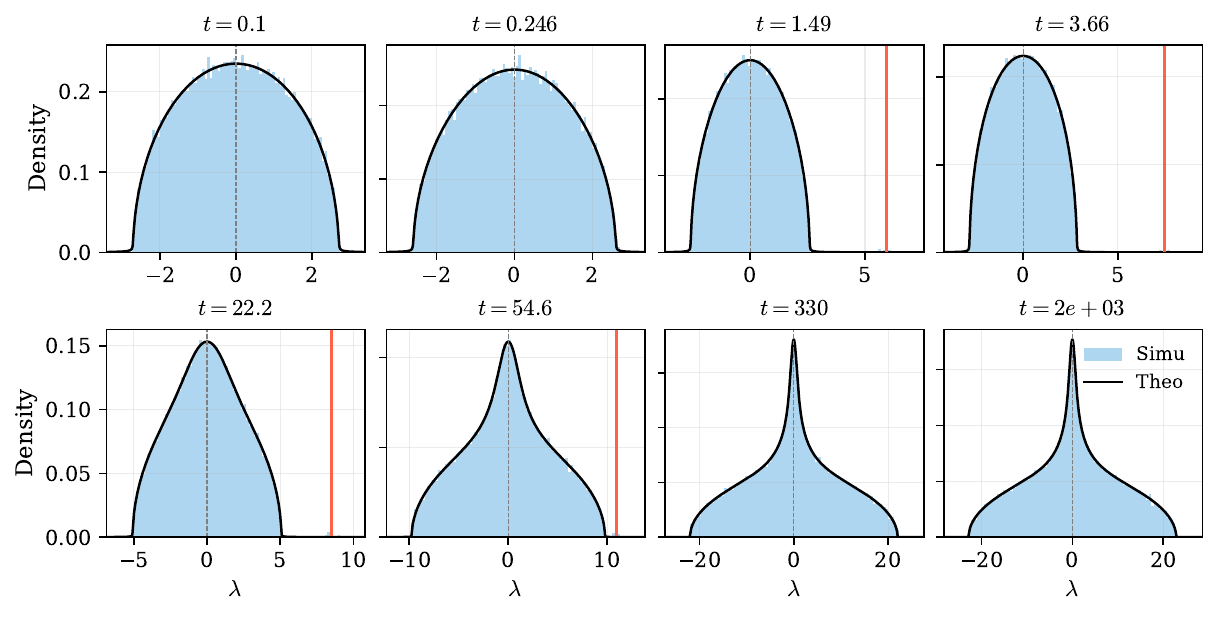}
  \caption{\small Empirical spectral density of $S_t$ (blue histogram, 20 realizations, $N=500$) versus the theoretical density from the $2\times2$ Dyson equation \eqref{eq:dyson}--\eqref{eq:mz} (black curve) at eight training times. Parameters: $\lambda_+=1$, $\lambda_-=0.1$, $\gamma=1$, and $\alpha=0.5$. In the signal case $\theta=6$, the red vertical line marks the outlier location as predicted in Section~\ref{sec:bbp} when an outlier is present. One clearly sees how the outlier first emerges from the bulk before being reabsorbed.}
  \label{fig:bulk}
\end{figure}

Three dynamical phases are visible in Figure~\ref{fig:bulk}. At early times the spectrum is close to a single semicircle, since both blocks still behave similarly and the matrix is dominated by the GOE initialization. At intermediate times the two learning scales separate, the variance profile becomes genuinely two-block, and the bulk deforms accordingly. At very late times the GOE contribution has essentially decayed, while the factor $(XX^\top)^{-1}$ amplifies the slow block and broadens the support. Across the full dynamical range, the theoretical curve obtained from the Dyson equation agrees well with the empirical histogram. When the rank-one teacher is switched on  ($\theta=6$, red vertical lines), one clearly sees how an outlier first detaches from the bulk at intermediate times and is then reabsorbed at late times; the full analysis of this transient BBP transition is the subject of the next section. Numerical experiments with a full-rank power-law input spectrum, where the covariance eigenvalues are drawn from a power-law distribution, confirm that the same three dynamical phases persist well beyond the two-block setting; we detail this extension in Appendix~\ref{app:fullrank}.

\section{Rank-two transient BBP transition}
\label{sec:bbp}

\subsection{Outlier condition for a rank-one teacher}

We now analyze the effect of the rank-one teacher on~$S_t$
by considering
\[
S_t^\theta
=
S_t + \theta\bigl(\widetilde v w_t^\top + w_t\widetilde v^\top\bigr),
\qquad
w_t=(I-e^{-t\Lambda^2})\widetilde v.
\]
Because $w_t$ is proportional to $\widetilde v$ only in the isotropic case $\lambda_-=1$, the perturbation is genuinely rank two at finite time, and the standard rank-one BBP formula \cite{benaych2011eigenvalues} does not apply directly. We reduce the problem to a $2\times 2$ determinantal condition in Appendix~\ref{app:BBP}.

\begin{proposition}[Outlier condition]
\label{prop:outlier}
Let $L_-(t)$ and $L_+(t)$ denote the lower and upper edges of the limiting bulk spectrum of $S_t$. For $\theta\ge 0$, the perturbed matrix $S_t^\theta$ has an outlier eigenvalue $\xi_{\theta,t}$ if and only if it solves
\begin{equation}
\label{eq:outlier}
\bigl(1-\theta\,\psi_t(\xi_{\theta,t})\bigr)^2
=
\theta^2\,\phi_t(\xi_{\theta,t})\chi_t(\xi_{\theta,t}),
\qquad
\xi_{\theta,t}\in\mathbb R\setminus [L_-(t),L_+(t)],
\end{equation}
where the deterministic functions $\phi_t,\psi_t,\chi_t$ are defined in Appendix~\ref{app:BBP}, Eq.~\eqref{eq:quad_form}.
\end{proposition}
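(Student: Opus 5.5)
The approach is the standard finite-rank perturbation reduction of Benaych-Georges and Nadakuditi, adapted to a rank-two symmetric perturbation. Write the perturbation as $P_t=\theta\,UCU^\top$ with $U=[\widetilde v,\;w_t]\in\mathbb R^{N\times 2}$ and $C=\begin{pmatrix}0&1\\1&0\end{pmatrix}$. For $\xi\notin\mathrm{spec}(S_t)$ we have $\xi I-S_t^\theta=(\xi I-S_t)\bigl(I-\theta G_t(\xi)UCU^\top\bigr)$, where $G_t(\xi)=(\xi I-S_t)^{-1}$. Sylvester's identity $\det(I_N-AB)=\det(I_2-BA)$ then shows that $\xi$ is an eigenvalue of $S_t^\theta$ outside the spectrum of $S_t$ if and only if
\[
\det\bigl(I_2-\theta\,C\,U^\top G_t(\xi)U\bigr)=0 .
\]
The $2\times2$ matrix $U^\top G_t U$ has entries $\widetilde v^\top G_t\widetilde v$, $\widetilde v^\top G_t w_t$ and $w_t^\top G_t w_t$. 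Expanding the determinant gives
\[
\bigl(1-\theta\,\widetilde v^\top G_t w_t\bigr)^2=\theta^2\,(\widetilde v^\top G_t\widetilde v)(w_t^\top G_t w_t).
\]
This has the announced form with $\psi_t\leftrightarrow \widetilde v^\top G_t w_t$, $\phi_t\leftrightarrow\widetilde v^\top G_t\widetilde v$ and $\chi_t\leftrightarrow w_t^\top G_tw_t$.

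The next step is to replace these quadratic forms by deterministic limits. The key input is an isotropic local law for Wigner-type matrices with block-constant variance profile, as in Ajanki--Erd\H{o}s--Kr\"uger. It says that for deterministic unit vectors $x,y$, and $\xi$ at positive distance from $[L_-(t),L_+(t)]$,
\[
x^\top G_t(\xi)y-\sum_i x_iy_i\,M_i(\xi)\to 0 ,
\]
where $M_i=m_{\tau(i)}$ solves the Dyson equation of Proposition~\ref{prop:dyson}, with the sign convention of $G=(\xi-S)^{-1}$. Here $\widetilde v=U^\top v$ depends on $X$, but it is independent of $W$ and $\Gamma$, so we can condition on $X$. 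Since $w_t$ is diagonal-block multiplied from $\widetilde v$, everything reduces to the block masses $p_A=\|\widetilde v_A\|^2$ and $p_B=1-p_A$. We take these to converge, to $\alpha$ and $1-\alpha$ for Haar $U$. This gives
\[
\phi_t=p_Am_A+p_Bm_B,\qquad
\psi_t=(1-a_t)p_Am_A+(1-b_t)p_Bm_B,\qquad
\chi_t=(1-a_t)^2p_Am_A+(1-b_t)^2p_Bm_B .
\]
These are evaluated at real $\xi$ outside the support, where $m_A$ and $m_B$ extend analytically and are real, and they define Eq.~\eqref{eq:quad_form}.

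To finish, convergence holds uniformly on compacts away from the bulk, and the functions involved are analytic in $\xi$. A Hurwitz/Rouché argument, applied to the random determinant and its deterministic limit, then shows that outliers of $S_t^\theta$ converge exactly to the real zeros of the limiting equation outside $[L_-(t),L_+(t)]$. This gives both directions of the ``if and only if''. We also need that no eigenvalue of $S_t$ itself strays outside the bulk. This follows from the norm and no-outlier part of the same local law, since the variance profile is flat and bounded.

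The main obstacle is the isotropic law for the non-standard model $WD_1+D_1W+\Gamma D_2+D_2\Gamma^\top$. Its entries are not independent across $(i,j)$ in the usual Wigner sense, because the $\Gamma$ and $\Gamma^\top$ terms and the $W$ terms combine. My first attempt would be to show that $S_t$ is Gaussian with independent upper-triangular entries up to the diagonal. For $i<j$, $S_{ij}$ involves $W_{ij},\Gamma_{ij},\Gamma_{ji}$, which are disjoint across pairs, so $S_t$ is a genuine Gaussian Wigner-type matrix with the profile \eqref{eq:variance_profile}, and the theorem of Ajanki et al.\ applies directly. A secondary delicate point is the edge regime and possible degenerate solutions where a zero touches $L_\pm(t)$. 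Those correspond to the BBP thresholds and are excluded by requiring strict separation.
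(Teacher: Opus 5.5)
Your proposal is correct and is essentially the paper's argument in Appendix~\ref{app:BBP}. There, the rank-two matrix determinant lemma reduces outliers to real zeros of $(1-\theta\psi_t)^2-\theta^2\phi_t\chi_t$ outside $[L_-(t),L_+(t)]$, and the quadratic forms $\widetilde v^\top R\widetilde v$, $\widetilde v^\top R w_t$, $w_t^\top R w_t$ are replaced by $\alpha f_{A,t}^k m_{A,t}+(1-\alpha)f_{B,t}^k m_{B,t}$ for $k=0,1,2$, exactly as you do. The extra ingredients you supply are steps the paper asserts without proof: conditioning on $X$, an isotropic local law for the Gaussian Wigner-type matrix, a Hurwitz/Rouch\'e transfer of zeros, and the absence of outliers of $S_t$ itself. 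They make the same route rigorous rather than change it.
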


\begin{remark}[Isotropic limit]
In the isotropic case $\lambda_-=1$, one has $w_t=f_t\widetilde v$ with $f_t=1-e^{-t}$. Hence $\psi_t(z)=f_t\phi_t(z)$ and $\chi_t(z)=f_t^2\phi_t(z)$, so \eqref{eq:outlier} reduces to
\[
1-2\theta f_t\,\phi_t(\xi_{\theta,t})=0,
\]
which is exactly the standard rank-one BBP equation with effective spike strength $2\theta f_t$.
\end{remark}

\begin{remark}[Boundary times]
At $t=0$, one has $w_t=0$, hence $\psi_0\equiv\chi_0\equiv 0$, and \eqref{eq:outlier} has no solution outside the bulk. At the other extreme, as $t\to\infty$, one has $w_t\to \widetilde v$, so $\phi_t(z),\psi_t(z),\chi_t(z)\to m_\infty(z)$ and \eqref{eq:outlier} reduces to
\[
1-2\theta\,m_\infty(\xi_{\theta,\infty})=0,
\]
which is the usual steady-state spike equation for the limiting matrix $S_\infty^\theta$.
\end{remark}

\subsection{BBP thresholds and signal regimes}

The upper outlier first appears when \eqref{eq:outlier} acquires a solution at the upper edge $L_+(t)$. The corresponding critical signal level is
\begin{equation}
\label{eq:theta_c}
\theta_c(t)
:=
\frac{1}{\psi_t(L_+(t)^+) + \sqrt{\phi_t(L_+(t)^+)\chi_t(L_+(t)^+)}}.
\end{equation}

For fixed signal strength $\theta$, define the fitting set
\[
\mathcal T_\theta := \{t>0:\ \theta>\theta_c(t)\}.
\]

\begin{theorem}[Three signal regimes]
\label{thm:three_phases}
In the limit $N,M\to\infty$ with $M/N\to\gamma>1$, exactly one of the following scenarios occurs:
\begin{enumerate}
  \item \textbf{Weak-signal regime:} $\mathcal T_\theta=\varnothing$. No outlier ever separates from the bulk, and the teacher direction is never spectrally detectable.
  \item \textbf{Strong-signal regime:} $\mathcal T_\theta=(t_1,\infty)$ for some $t_1>0$. An outlier emerges at time $t_1$ and remains separated from the bulk thereafter.
  \item \textbf{Early-stopping regime:} $\mathcal T_\theta=(t_1,t_2)$ for some $0<t_1<t_2<\infty$. An outlier exists only on a finite time window. This is the transient BBP regime associated with early stopping.
\end{enumerate}
\end{theorem}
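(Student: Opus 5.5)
The plan is to reduce the theorem to a statement about the shape of the function $t\mapsto\theta_c(t)$ defined in \eqref{eq:theta_c}, or equivalently about its reciprocal
\[
g(t):=\psi_t(L_+(t)^+)+\sqrt{\phi_t(L_+(t)^+)\chi_t(L_+(t)^+)},
\]
and to show that $g$ is continuous on $(0,\infty)$ and unimodal. Unimodal here means nondecreasing and then nonincreasing, allowing a monotone function as a degenerate case. Since $\mathcal T_\theta=\{t: g(t)>1/\theta\}$, unimodality forces each superlevel set of $g$ to be an interval $(t_1,t_2)$ with $0\le t_1<t_2\le\infty$, or the empty set. The three listed regimes are then read off as follows: empty set, $t_2=\infty$, or $t_2<\infty$. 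Exclusivity is automatic.

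First I would establish the endpoint behaviour. At $t\to0$ we have $w_t\to0$, so $\psi_t,\chi_t\to0$ while $\phi_t$ stays bounded at the edge, since the bulk at $t=0$ is a semicircle with finite edge value of $m$. Hence $g(t)\to0$, and this gives $t_1>0$ whenever $\mathcal T_\theta\neq\varnothing$. As $t\to\infty$, the boundary-times remark gives $g(t)\to 2m_\infty(L_+(\infty))$. For $\gamma>1$ this limit is finite and positive, because $S_\infty$ is a two-block Wigner-type matrix with bounded variance profile and square-root edges. This is exactly where $\gamma>1$ enters: it ensures the edge values of the Stieltjes transforms are finite. Continuity of $g$ in $t$ follows from the continuity of the variance profile \eqref{eq:variance_profile} together with stability of the Dyson equation \eqref{eq:dyson} at a regular square-root edge, using Ajanki--Erd\H{o}s--Kr\"uger. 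That result gives continuity of both $L_+(t)$ and the edge values of $m_A$, $m_B$. It also yields $\phi_t,\psi_t,\chi_t$, which are quadratic forms $\langle \widetilde v,(z-S_t)^{-1}\widetilde v\rangle$ and their $w_t$-analogues, expressed as $\alpha$/$(1-\alpha)$-weighted combinations of $m_A$ and $m_B$ with weights $1$, $(1-e^{-t})$ or $(1-e^{-\lambda_-^2t})$, and their squares.

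Next, I would verify that the upper outlier exists iff $\theta>\theta_c(t)$. Starting from Proposition~\ref{prop:outlier}, the function $F(\xi)=\theta\psi_t(\xi)+\theta\sqrt{\phi_t\chi_t}(\xi)$ is positive and decreasing on $(L_+,\infty)$, with limit $0$ at infinity, because each of $\phi_t,\psi_t,\chi_t$ is the Stieltjes transform of a positive measure. The relevant branch of \eqref{eq:outlier} is $F(\xi)=1$, and it has a unique solution iff $F(L_+^+)>1$, that is, iff $\theta>\theta_c(t)$. To make this rigorous I would apply Cauchy--Schwarz, $\psi_t^2\le\phi_t\chi_t$, to justify taking the positive root.

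The main obstacle is unimodality of $g$. My first attempt is a direct computation. I would write $g$ in terms of $x=e^{-t}$ and the two block edge values $m_A(L_+)$, $m_B(L_+)$, then differentiate using implicit differentiation of \eqref{eq:dyson} at the edge, where the Jacobian is singular but the edge condition supplies an extra equation. The goal is to show that $g'$ changes sign at most once. Heuristically, the signal weight $(1-e^{-t})$ on block $A$ saturates at time $O(1)$, while the noise in block $B$, measured by $d_t^2/\gamma$, keeps increasing up to $t\sim\lambda_-^{-2}$ and pushes $L_+$ outward. This produces a single increase-then-decrease. If the sign analysis resists a closed form, I would fall back on proving quasi-concavity separately on the two time scales, $t\ll\lambda_-^{-2}$ and $t\gtrsim1$, via asymptotic expansion in $\lambda_-$. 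Failing that, I would weaken the claim to the statement that $\mathcal T_\theta$ is a finite union of intervals and verify the single-interval structure numerically, which I suspect is what the full rigorous argument would ultimately require.
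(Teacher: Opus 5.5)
\textbf{Verdict: genuine gap.} Your reduction is correct, but the decisive step, quasi-concavity of $g=1/\theta_c$ in $t$, is not proved. The paper has the same hole.

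\textbf{Comparison with the paper.} The paper gives no separate proof of this theorem. Its justification is the reduction $\mathcal T_\theta=\{t:\theta>\theta_c(t)\}$, together with the numerically observed shape of $\theta_c$: monotone for $\lambda_-=1$, decreasing then increasing for $\lambda_-=0.1$ (Figure~\ref{fig:theta_c}). It adds the sign-change scan of $F(\theta,t)$ behind Figure~\ref{fig:phase_lambda}. Your reduction is this same argument. Your preparatory steps are sound and more careful than the paper's:
\begin{enumerate}
  \item $g(0^+)=0$ because $w_t\to 0$, so $t_1>0$.
  \item $g(\infty)=2m_\infty(L_+(\infty))$.
  \item On $(L_+,\infty)$ the limits of $\phi_t,\psi_t,\chi_t$ are $\sum_i p_i f_i^k m_i$, with each $m_i>0$ and decreasing. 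The weighted Cauchy--Schwarz inequality $\psi_t^2\le\phi_t\chi_t$ therefore rules out the branch $\theta(\psi_t-\sqrt{\phi_t\chi_t})=1$. The remaining branch has a unique root if and only if $\theta g(t)>1$.
\end{enumerate}

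\textbf{The gap.} The missing step is the one you flag: quasi-concavity of $g$ on $(0,\infty)$ for all $\alpha,\lambda_-,\gamma$. This is the entire content of ``exactly one of''. Without it, $\mathcal T_\theta$ is only known to be open. A $g$ that increases, decreases, then increases again would give $\mathcal T_\theta=(t_1,t_2)\cup(t_3,\infty)$ for suitable $\theta$, which is none of the three scenarios.

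The fast/slow heuristic does not rule this out, because the direction in which $g$ approaches its limit depends on $\lambda_-$.
\begin{enumerate}
  \item Small $\lambda_-$: with $\varepsilon=e^{-\lambda_-^2 t}$, to leading order as $\lambda_-\to 0$ one finds $g(t)-g(\infty)\approx 2\varepsilon\,\alpha\, m_A(L_+)>0$. Having $d_t$ below $1/\lambda_-$ lowers the noise by more than the signal deficit $f_{B,t}<1$ costs.
  \item $\lambda_-=1$: here $g=2/\sqrt{4(1/f_t-1)^2+2/\gamma}$, which increases to its limit.
\end{enumerate}
So the late-time monotonicity flips at some intermediate $\lambda_-$. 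A proof must show that an interior maximum above $g(\infty)$ never coexists with a late-time approach from below.

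Your proposed routes do not establish this.
\begin{enumerate}
  \item Implicit differentiation at the fold (the $2\times2$ Dyson system plus the vanishing-Jacobian edge condition) gives $g'$ only as an implicit expression, with no visible single sign change.
  \item A small-$\lambda_-$ expansion covers only a corner of parameter space. It would still need matching across $1\lesssim t\lesssim \lambda_-^{-2}$.
  \item The numerical fallback is exactly what the paper does.
\end{enumerate}
What you have, like the paper, is a conditional result: the trichotomy holds provided $\theta_c$ is quasi-convex in $t$. On top of that there is numerical evidence for the plotted parameters.

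\textbf{A smaller point.} $\gamma>1$ is not what makes the edge values finite. In the two-block model $\Lambda$ is prescribed. At every $t$ the profile, including the $t=\infty$ values $\sigma_{AA}=2/\gamma$, $\sigma_{BB}=2/(\gamma\lambda_-^2)$, $\sigma_{AB}=(1+\lambda_-^{-2})/\gamma$, is bounded above and below for every $\gamma>0$. That is all the Ajanki--Erd\H{o}s--Kr\"uger theory needs for square-root edges and bounded $m$. The paper's numerics are in fact run at $\gamma=1$. The paper uses $\gamma\ge 1$ only to exclude the null-space block of Appendix~\ref{app:gammalt1}. Drop that sentence; it does not affect your logic.
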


In the early-stopping regime, $t_1$ and $t_2$ are interpreted as follows: for $t<t_1$ the model is underfitted and the teacher direction cannot be determined; for $t_1<t<t_2$ the teacher is spectrally detectable in the sense that the eigenvector $u^\star$ corresponding to the spike has a computable, non-zero overlap $(u^\star \cdot v)^2$ with the teacher direction (see e.g. \cite{potters2020first}, section 14.2.2); and for $t>t_2$ the outlier is reabsorbed into the bulk, corresponding to a late-time overfitting phase. The optimal stopping time, defined as the $t_{\rm opt} \in (t_1,t_2)$ that maximizes $(u^\star \cdot v)^2$, is discussed in Section~\ref{subsec:overlap} below.

\begin{figure}[h]
  \begin{minipage}{0.5\textwidth}
    We illustrate this behaviour by plotting the evolution in time of $\theta_c(t)$ in Figure~\ref{fig:theta_c}. When $\lambda_+=\lambda_-=1$, the function is monotone, so that only regimes 1 and 2 are possible depending on the value of $\theta$. On the contrary, in the anisotropic regime $\lambda_-=0.1$, the function is no longer monotone: $\theta_c$ first decreases (the fast block reveals the teacher) and then increases again (the slow block, amplified by $\Lambda^{-1}$, injects enough noise to hide the signal). This means that, for an intermediate range of values of $\theta$, a spike is visible only during a finite time interval -- this is the early-stopping regime, and the minimum of $\theta_c$ is the best detection threshold achievable along the whole training trajectory.
  \end{minipage}%
  \hfill
  \begin{minipage}{0.45\textwidth}
    \centering
    \includegraphics[width=\linewidth]{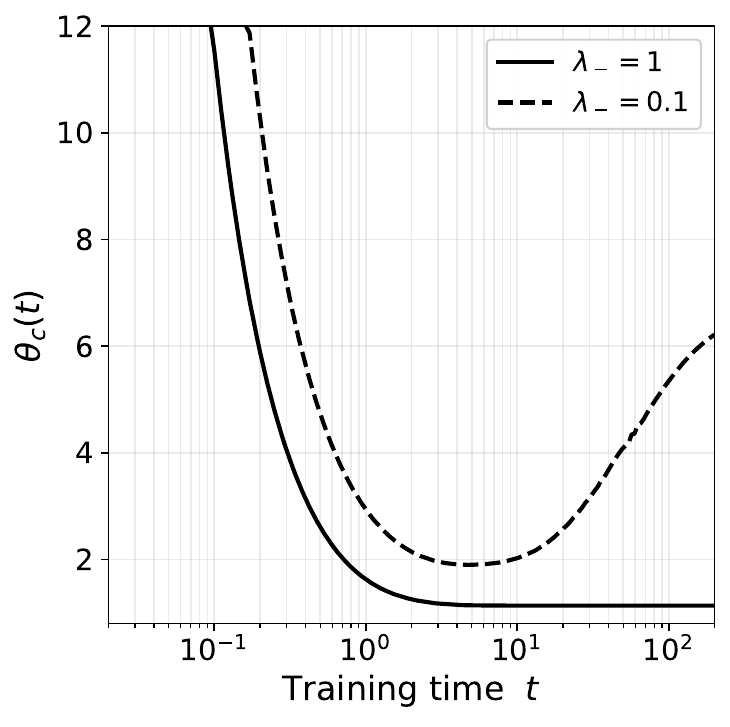}
    \caption{\small Evolution of $\theta_c$ defined in~\eqref{eq:theta_c} for $\alpha=0.5$ and two values of $\lambda_-$.}
    \label{fig:theta_c}
  \end{minipage}
\end{figure}

\subsection{Phase diagrams and the role of anisotropy}
\label{subsec:phase_diagrams}

The curve $\theta=\theta_c(t)$ in the $(\theta,t)$ plane directly separates the three regimes of Theorem~\ref{thm:three_phases}. Horizontal cuts at fixed $\theta$ give the fitting set $\mathcal T_\theta$, and the minimum of the boundary curve corresponds to the minimum of $\theta_c(t)$ visible in Figure~\ref{fig:phase_thetat}. As the covariance becomes more anisotropic (smaller $\lambda_-$), the transient regime broadens; as the signal becomes stronger, the fitting window opens earlier and may become persistent.

\begin{figure}[ht]
  \centering
  \includegraphics[width=0.85\linewidth]{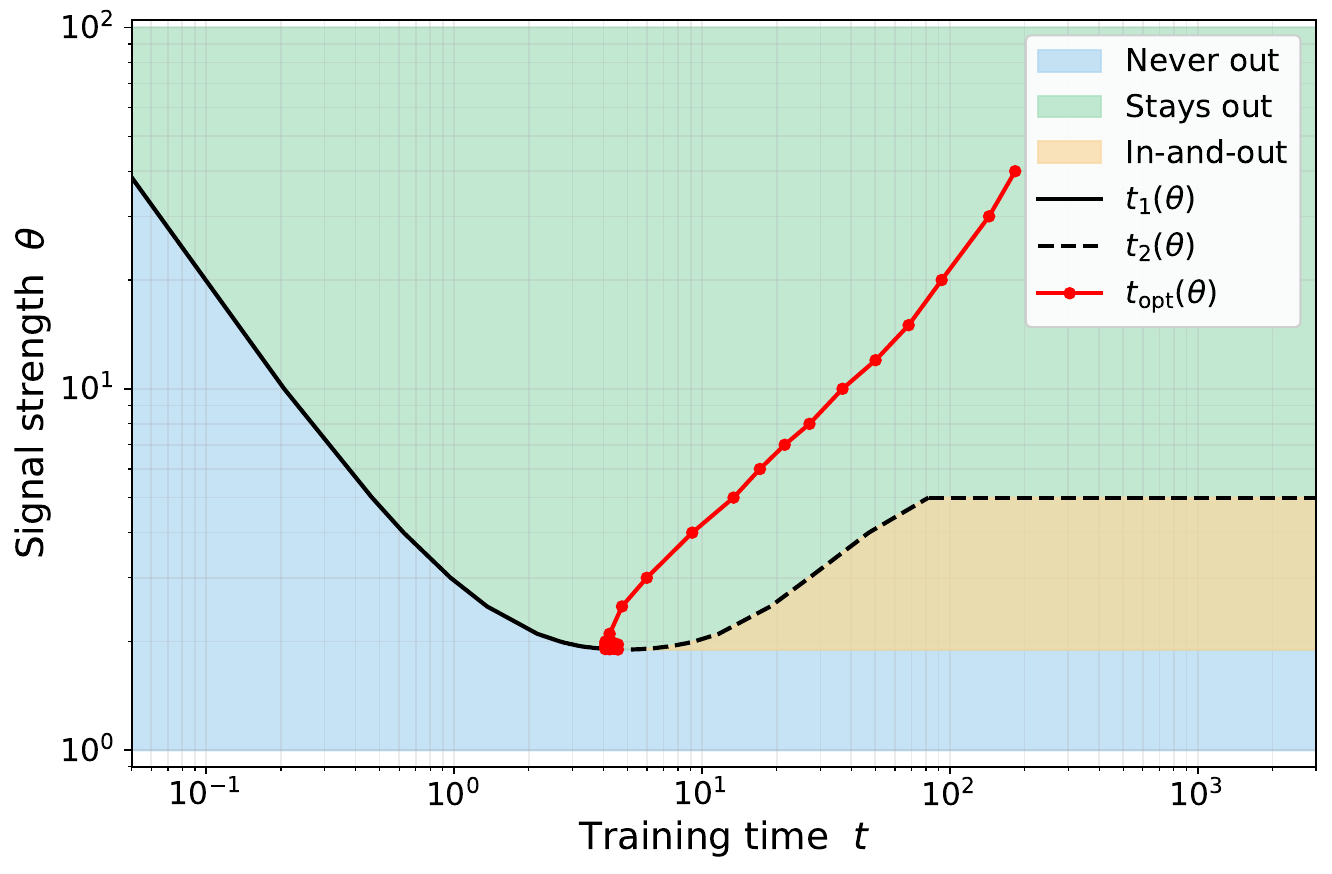}
  \caption{\small Phase diagram in the $(\theta,t)$ plane for $\lambda_-=0.1$, $\gamma = 1$ and $\alpha=0.5$. The critical curve $\theta=\theta_c(t)$ separates three regimes: no outlier (underfitting), transient outlier (early-stopping window), and persistent outlier. For a fixed intermediate value of $\theta$, the outlier emerges at $t_1$ and is reabsorbed at $t_2$. The overlaid red curve shows the optimal stopping time $t_{\rm opt}(\theta)$ defined in Section~\ref{subsec:overlap}: in our experiments it lies strictly inside the transient window $(t_1,t_2)$ whenever the latter exists.}
  \label{fig:phase_thetat}
\end{figure}

To examine how anisotropy controls the possible learning regimes, we scan the $(\theta, \lambda_-)$ plane and, for each pair, count the number of sign changes of
\[
F(\theta,t)
:=
\bigl(1-\theta\psi_t(L_+(t)^+)\bigr)^2 - \theta^2\phi_t(L_+(t)^+)\chi_t(L_+(t)^+)
\]
over $t\in[0.05,3000]$.

\begin{figure}[ht]
  \centering
  \includegraphics[width=0.85\linewidth]{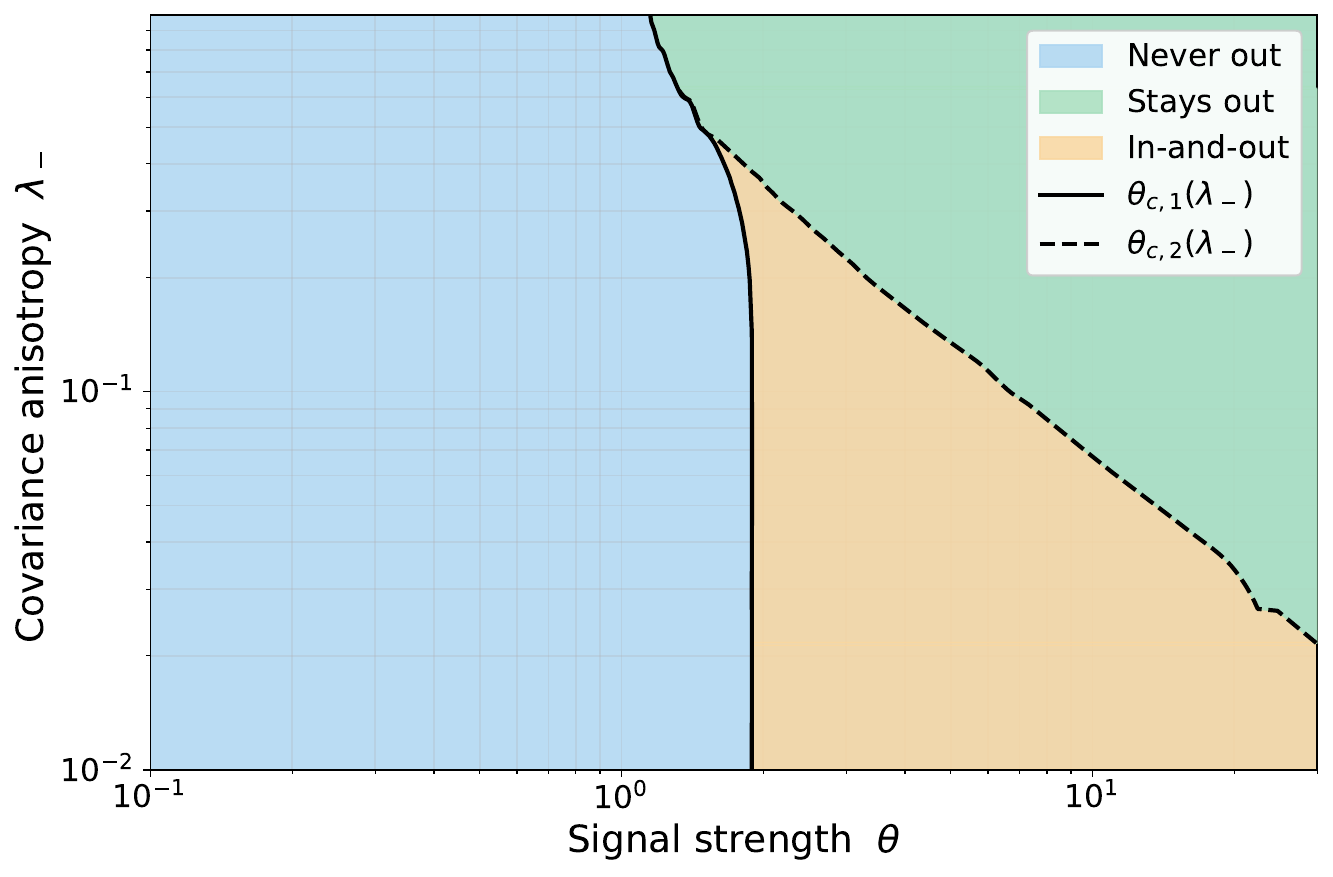}
  \caption{\small Outlier-regime classification in the $(\theta,\lambda_-)$ plane for $\lambda_+=1$ and $\gamma=1$, $\alpha=0.5$, obtained by scanning over $t\in[0.05,3000]$. Blue: no outlier ever emerges. Red: an outlier emerges and remains separated from the bulk. Yellow: an outlier emerges and is later reabsorbed, corresponding to the transient early-stopping regime.}
  \label{fig:phase_lambda}
\end{figure}

The resulting $(\theta,\lambda_-)$ phase diagram is shown in Figure~\ref{fig:phase_lambda}. Two qualitative features are worth emphasizing. First, the yellow ``in-and-out'' wedge -- the regime in which early stopping is the only way to recover the teacher -- opens only below a critical anisotropy threshold $\lambda_-^\star$: for nearly isotropic inputs ($\lambda_-$ close to $1$) the transient regime shrinks and eventually disappears, and only the weak- and strong-signal regimes survive. This is the practitioner-facing summary of the mechanism: early stopping is genuinely useful only when the data are sufficiently anisotropic. Second, even for strongly anisotropic inputs, the early-stopping regime is bounded below in $\theta$: below a minimum signal strength, no amount of well-chosen stopping time recovers the teacher. Taken together, Figures~\ref{fig:phase_thetat} and~\ref{fig:phase_lambda} provide a complete map of the parameter region in which early stopping is a principled necessity rather than a mere heuristic.

\subsection{Teacher recovery and optimal stopping}
\label{subsec:overlap}

The regime classification of Theorem~\ref{thm:three_phases} tells us \emph{when} the teacher is spectrally detectable, but not \emph{how well} it is reconstructed at any given time. The natural scalar summary of teacher recovery is the squared overlap
\[
q_t := (u_t^\star \cdot v)^2
\]
between the outlier eigenvector $u_t^\star$ and the true teacher direction $v$, with the convention $q_t=0$ when no outlier exists. This is the quantity that controls the rank-one content of the trained weight matrix along the teacher direction.

Because the outlier is a simple real zero of the $2\times 2$ Schur complement $M_t(z)$ introduced in Appendix~\ref{app:BBP}, the overlap $q_t$ can be obtained, via the Woodbury identity, as a residue of $v^\top(zI-S_t^\theta)^{-1}v$ at $z=\xi_{\theta,t}$. This residue is a rational function of the deterministic quantities $\phi_t,\psi_t,\chi_t$ and their derivatives at $\xi_{\theta,t}$; we sketch the derivation in Appendix~\ref{app:BBP} and do not expand the expression here, since it is not used in what follows. In the isotropic limit $\lambda_-=1$ it collapses to the classical Benaych-Georges--Nadakuditi formula for rank-one BBP. What matters for the discussion below is a qualitative feature of $q_t$: it is continuous on the fitting set $\mathcal T_\theta$, vanishes on its boundary (as the outlier merges into the bulk edge and the eigenvector delocalizes over the bulk), and is strictly positive in its interior. In the early-stopping regime $\mathcal T_\theta=(t_1,t_2)$, $q_t$ therefore attains at least one interior maximum, which defines an \emph{optimal early-stopping time}
\[
t_{\rm opt}(\theta) \in \underset{t\in(t_1,t_2)}{\arg\max}\; q_t.
\]

\paragraph{Numerical evidence} We computed $t_{\rm opt}(\theta)$ by one-dimensional maximization of $q_t$ over the window $(t_1(\theta),t_2(\theta))$ already located by the phase-diagram computation, for the full range of $\theta$ covered by Figure~\ref{fig:phase_thetat}. The result is overlaid as the red curve in Figure~\ref{fig:phase_thetat}. In our experiments $q_t$ is unimodal on $(t_1,t_2)$, so $t_{\rm opt}(\theta)$ is a single point, and it lies strictly inside the transient window for every $\theta$ in the early-stopping regime. The curve interpolates smoothly into the strong-signal regime, tracing the locus of best teacher recovery along the entire $(\theta,t)$ plane. In the linear teacher--student setting, $q_t$ is, up to a contribution from the bulk of the trained weight matrix, the scalar that controls the rank-one component of the reconstruction error between the trained weights and the teacher $A_{\mathrm{teach}}$; a rigorous test-error decomposition is beyond the scope of this paper and is left to future work.

\section{Conclusion and Extensions}

We introduced a solvable random-matrix model in which the spectrum of a symmetrized weight matrix undergoes a \emph{transient BBP transition} along gradient flow. Depending on signal strength and covariance anisotropy, the teacher spike may never emerge, may emerge and persist, or may exist only during a finite time window. In this last regime, the overlap between the outlier eigenvector and the teacher direction attains an interior maximum along the training trajectory, which numerically identifies an optimal stopping time and gives a minimal theoretical mechanism for early stopping as a transient spectral effect.

The driving ingredient is anisotropy of the input covariance. Fast directions reveal the teacher early, whereas slow directions continue to carry noise and can eventually hide the signal again within the fat tail generated by the inverse covariance in the least-squares solution. In this sense, overfitting is not an abstract late-time pathology in our model: it is a concrete consequence of spectral heterogeneity in the data.

\paragraph{Connection to empirical observations} Although our model is deliberately simple, it qualitatively reproduces several features reported in empirical spectral studies of neural-network training \cite{martin2021implicit,thamm2022random,staats2023boundary,yunis2024approaching,olsen2025sgd}: the appearance of isolated eigenvalues at intermediate training times, the broadening of the bulk at late times, and the association of the best generalization time with the transient rather than the asymptotic spectral structure. What our model does \emph{not} capture is the genuinely heavy-tailed bulk and the multi-layer nonlinear coupling observed in deep networks at convergence; these remain outside the scope of a two-block linear teacher--student setup and should be addressed by complementary models.

%\paragraph{Extensions}
%Several extensions are natural. First, one can replace the rank-one teacher by a low-rank signal, which should lead to multiple interacting BBP transitions, where some directions may be permanently learned while others are fleeting (and others still remain unlearnable). Second, one can add stochastic gradient noise or model mismatch. Third, one can go beyond the strict two-block covariance assumption for inputs and ask what minimal conditions on the spectrum of the covariance matrix are needed for our transient BBP scenario to take place. Another direction is to study the Wishart-type multiplicative symmetrization $A_tA_t^\top$ rather than the additive one; this is the matrix on which empirical spectral analyses of deep networks are usually performed \cite{martin2021implicit}, and analyzing its time-dependent spectrum is a natural next step.

Finally, the present framework suggests a connection with the Neural Tangent Kernel (NTK) perspective \cite{jacot2018ntk}. In the infinite-width limit, gradient flow with mean-square loss leads to linear output dynamics governed by the NTK matrix, whose spectrum plays the same role as $XX^\top$ in our analysis; see also \cite{xiao2020disentangling}. This suggests that the transient BBP mechanism identified here may provide a useful toy model for understanding why early stopping is often effective in more complex architectures.

\newpage

\bibliographystyle{unsrt}
\bibliography{refs}

@article{bordelon2026disordered,
  title={Disordered Dynamics in High Dimensions: Connections to Random Matrices and Machine Learning},
  author={Bordelon, Blake and Pehlevan, Cengiz},
  journal={arXiv preprint arXiv:2601.01010},
  year={2026}
}

@article{atanasov2024scaling,
  title={Scaling and renormalization in high-dimensional regression},
  author={Atanasov, Alexander and Zavatone-Veth, Jacob A and Pehlevan, Cengiz},
  journal={arXiv preprint arXiv:2405.00592},
  year={2024}
}

@article{yunis2024approaching,
  title={Approaching deep learning through the spectral dynamics of weights},
  author={Yunis, David and Patel, Kumar Kshitij and Wheeler, Samuel and Savarese, Pedro and Vardi, Gal and Livescu, Karen and Maire, Michael and Walter, Matthew R},
  journal={arXiv preprint arXiv:2408.11804},
  year={2024}
}

@article{martin2021implicit,
  title={Implicit self-regularization in deep neural networks: Evidence from random matrix theory and implications for learning},
  author={Martin, Charles H and Mahoney, Michael W},
  journal={Journal of Machine Learning Research},
  volume={22},
  number={165},
  pages={1--73},
  year={2021}
}

@article{staats2023boundary,
  title={Boundary between noise and information applied to filtering neural network weight matrices},
  author={Staats, Max and Thamm, Matthias and Rosenow, Bernd},
  journal={Physical Review E},
  volume={108},
  number={2},
  pages={L022302},
  year={2023},
  publisher={APS}
}

@article{thamm2022random,
  title={Random matrix analysis of deep neural network weight matrices},
  author={Thamm, Matthias and Staats, Max and Rosenow, Bernd},
  journal={Physical Review E},
  volume={106},
  number={5},
  pages={054124},
  year={2022},
  publisher={APS}
}

@book{potters2020first,
  title={A first course in random matrix theory: for physicists, engineers and data scientists},
  author={Potters, Marc and Bouchaud, Jean-Philippe},
  year={2020},
  publisher={Cambridge University Press}
}

@article{baik2005phase,
  title={Phase transition of the largest eigenvalue for nonnull complex sample covariance matrices},
  author={Baik, Jinho and Ben Arous, G{\'e}rard and P{\'e}ch{\'e}, Sandrine},
  journal={The Annals of Probability},
  volume={33},
  number={5},
  pages={1643--1697},
  year={2005},
  publisher={Institute of Mathematical Statistics}
}

@article{benaych2011eigenvalues,
  title={The eigenvalues and eigenvectors of finite, low rank perturbations of large random matrices},
  author={Benaych-Georges, Florent and Nadakuditi, Raj Rao},
  journal={Advances in Mathematics},
  volume={227},
  number={1},
  pages={494--521},
  year={2011},
  publisher={Elsevier}
}

@article{ajanki2019,
  title={Stability of the matrix {D}yson equation and random matrices with correlations},
  author={Ajanki, Oskari H and Erd{\H{o}}s, L{\'a}szl{\'o} and Kr{\"u}ger, Torben},
  journal={Probability Theory and Related Fields},
  volume={173},
  number={1},
  pages={293--373},
  year={2019},
  publisher={Springer}
}

@article{ajanki2017universality,
  title={Universality for general {W}igner-type matrices},
  author={Ajanki, Oskari H and Erd{\H{o}}s, L{\'a}szl{\'o} and Kr{\"u}ger, Torben},
  journal={Probability Theory and Related Fields},
  volume={169},
  number={3},
  pages={667--727},
  year={2017},
  publisher={Springer}
}

@inproceedings{jacot2018ntk,
  title={Neural Tangent Kernel: Convergence and generalization in neural networks},
  author={Jacot, Arthur and Gabriel, Franck and Hongler, Cl{\'e}ment},
  booktitle={Advances in Neural Information Processing Systems},
  volume={31},
  pages={8571--8580},
  year={2018}
}

@inproceedings{xiao2020disentangling,
  title={Disentangling trainability and generalization in deep neural networks},
  author={Xiao, Lechao and Pennington, Jeffrey and Schoenholz, Samuel S},
  booktitle={Proceedings of the 37th International Conference on Machine Learning},
  pages={10462--10472},
  year={2020},
  organization={PMLR}
}

@article{bonnaire2025role,
  title={The role of the time-dependent Hessian in high-dimensional optimization},
  author={Bonnaire, Tony and Biroli, Giulio and Cammarota, Chiara},
  journal={Journal of Statistical Mechanics: Theory and Experiment},
  volume={2025},
  number={8},
  pages={083401},
  year={2025},
  publisher={IOP Publishing}
}

@article{barbier2019optimal,
  title={Optimal errors and phase transitions in high-dimensional generalized linear models},
  author={Barbier, Jean and Krzakala, Florent and Macris, Nicolas and Miolane, L{\'e}o and Zdeborov{\'a}, Lenka},
  journal={Proceedings of the National Academy of Sciences},
  volume={116},
  number={12},
  pages={5451--5460},
  year={2019},
  publisher={National Academy of Sciences}
}

@article{maillard2019high,
  title={High-temperature expansions and message passing algorithms},
  author={Maillard, Antoine and Foini, Laura and Castellanos, Alejandro Lage and Krzakala, Florent and M{\'e}zard, Marc and Zdeborov{\'a}, Lenka},
  journal={Journal of Statistical Mechanics: Theory and Experiment},
  volume={2019},
  number={11},
  pages={113301},
  year={2019},
  publisher={IOP Publishing and SISSA}
}

@inproceedings{ali2019continuous,
  title={A Continuous-Time View of Early Stopping for Least Squares Regression},
  author={Ali, Alnur and Kolter, J. Zico and Tibshirani, Ryan J.},
  booktitle={Proceedings of the Twenty-Second International Conference on Artificial Intelligence and Statistics},
  series={Proceedings of Machine Learning Research},
  volume={89},
  pages={1370--1378},
  year={2019},
  publisher={PMLR}
}

@inproceedings{saxe2014exact,
  title={Exact solutions to the nonlinear dynamics of learning in deep linear neural networks},
  author={Saxe, Andrew M. and McClelland, James L. and Ganguli, Surya},
  booktitle={International Conference on Learning Representations},
  year={2014}
}

@article{advani2020highdim,
  title={High-dimensional dynamics of generalization error in neural networks},
  author={Advani, Madhu S. and Saxe, Andrew M. and Sompolinsky, Haim},
  journal={Neural Networks},
  volume={132},
  pages={428--446},
  year={2020}
}

@article{olsen2025sgd,
  title={From SGD to Spectra: A Theory of Neural Network Weight Dynamics},
  author={Olsen, Brian Richard and Fatehmanesh, Sam and Xiao, Frank and Kumarappan, Adarsh and Gajula, Anirudh},
  journal={arXiv preprint arXiv:2507.12709},
  year={2025}
}

\newpage
\appendix

\section{Proof of Proposition~\ref{prop:dyson}}
\label{app:proof_dyson}

\begin{proof}
We fix a time $t\geq 0$ and drop all time indexes in the proof for notational simplicity. Write
\[
r_i := (D_1)_{ii}, \qquad q_i := (D_2)_{ii},
\]
so that $r_i\in\{a,b\}$ and $q_i\in\{c,d\}$ depending on whether $i$ belongs to block $A$ or block $B$.
From \eqref{eq:S_def}, for $i<j$ we have
\[
S_{ij}=(r_i+r_j)W_{ij}+q_j\Gamma_{ij}+q_i\Gamma_{ji}.
\]
Since the families $\{W_{ij}\}_{i\le j}$ and $\{\Gamma_{ij}\}_{i,j}$ are independent and centered, the upper-triangular entries of $S$ are independent and centered as well. Moreover,
\[
N\,\E S_{ij}^2
=
(r_i+r_j)^2
+
\frac{N}{M}\,(q_i^2+q_j^2)
\;\longrightarrow\;
(r_i+r_j)^2+\frac{q_i^2+q_j^2}{\gamma}.
\]
Hence the variance profile is piecewise constant, and for off-diagonal entries it is exactly
\[
\sigma_{AA}=4a^2+\frac{2c^2}{\gamma},
\qquad
\sigma_{BB}=4b^2+\frac{2d^2}{\gamma},
\qquad
\sigma_{AB}=(a+b)^2+\frac{c^2+d^2}{\gamma},
\]
as claimed in \eqref{eq:variance_profile}. The diagonal entries also have variance of order $N^{-1}$ and fit into the same Wigner-type framework.

Now consider the standard resolvent
\[
G(z):=(S-zI)^{-1}, \qquad z\in\mathbb C^+.
\]
By the general theory of Wigner-type matrices with variance profile
$\mathfrak s_{ij}:=\E |S_{ij}|^2$,
the diagonal resolvent entries converge to deterministic limits
$\widehat m_i(z)$ solving the quadratic vector equation
\[
-\frac{1}{\widehat m_i(z)}
=
z+\sum_{j=1}^N \mathfrak s_{ij}\,\widehat m_j(z),
\]
with $\Im \widehat m_i(z)>0$ for $\Im z>0$
\cite{ajanki2019,ajanki2017universality}.

Since the variance profile has only two block values, the solution is constant on each block:
\[
\widehat m_i(z)=\widehat m_A(z)\quad (i\in A),\qquad
\widehat m_i(z)=\widehat m_B(z)\quad (i\in B).
\]
Therefore the quadratic vector equation reduces to
\[
-\frac{1}{\widehat m_A(z)}
=
z+\alpha\,\sigma_{AA}\,\widehat m_A(z)
+(1-\alpha)\,\sigma_{AB}\,\widehat m_B(z),
\]
\[
-\frac{1}{\widehat m_B(z)}
=
z+\alpha\,\sigma_{AB}\,\widehat m_A(z)
+(1-\alpha)\,\sigma_{BB}\,\widehat m_B(z).
\]

Finally, Proposition~\ref{prop:dyson} is written in terms of the resolvent
\[
(zI-S)^{-1}=-\, (S-zI)^{-1}=-G(z).
\]
Thus, setting
\[
m_A(z):=-\widehat m_A(z),\qquad m_B(z):=-\widehat m_B(z),
\]
we obtain exactly
\[
\frac{1}{m_A(z)}
=
-z-\alpha\,\sigma_{AA}\,m_A(z)-(1-\alpha)\,\sigma_{AB}\,m_B(z),
\]
\[
\frac{1}{m_B(z)}
=
-z-\alpha\,\sigma_{AB}\,m_A(z)-(1-\alpha)\,\sigma_{BB}\,m_B(z).
\]
Moreover $\Im m_A(z),\Im m_B(z)<0$ for $\Im z>0$, and
$m_A(z),m_B(z)\sim -1/z$ as $|z|\to\infty$.
The averaged limit
\[
m(z)=\alpha\,m_A(z)+(1-\alpha)\,m_B(z)
\]
is therefore the Stieltjes transform of the limiting empirical spectral measure in the convention of \eqref{eq:mz}, and the density is recovered by the inversion formula
\[
\rho_S(\lambda)
=
-\frac{1}{\pi}\lim_{\eta\downarrow 0}\Im\,m(\lambda+i\eta).
\]
This proves the proposition.
\end{proof}

\section{The case $\gamma < 1$: A $3 \times 3$ Dyson system}
\label{app:gammalt1}

When $M < N$ (i.e.\ $\gamma = M/N < 1$), the input covariance
$XX^\top$ has a null space of dimension $(1-\gamma)N$.  In the null
space, the gradient flow never moves: the exponential
$e^{-tXX^\top}$ acts as the identity, and the noise term
$ZX^\top(XX^\top)^{-1}(I - e^{-tXX^\top})$ vanishes (since
$X^\top$ annihilates the null space).  Consequently, the weight
matrix retains its random initialisation in those directions for all
time.

This introduces a third block in the variance profile of the
symmetrised noise matrix, leading to a $3\times 3$ Dyson system
that generalises the $2\times 2$ system of
Section~\ref{sec:spectrum}. We also drop the time index for simplicity.

\subsection{Three-block structure}

We split the $N$ indices into three classes:
\begin{center}
\begin{tabular}{llll}
\toprule
Block & Fraction & $a_p$ & $c_p$ \\
\midrule
A (fast)  & $\alpha_A = \gamma/2$   & $e^{-t}$
          & $1 - e^{-t}$ \\[3pt]
B (slow)  & $\alpha_B = \gamma/2$   & $e^{-\lambda_-^2 t}$
          & $(1 - e^{-\lambda_-^2 t})/\lambda_-$ \\[3pt]
C (null)  & $\alpha_C = 1 - \gamma$ & $1$ & $0$ \\
\bottomrule
\end{tabular}
\end{center}

\noindent
The null-space block C has $a_C = 1$ (the initialisation is never
damped) and $c_C = 0$ (no noise is injected through $X$).

Using the general variance formula
$\sigma_{pq} = (a_p + a_q)^2 + (c_p^2 + c_q^2)/\gamma$, we obtain
six variance parameters:
\begin{equation}
\label{eq:var-3block}
\begin{aligned}
  \sigma_{AA} &= 4a_A^2 + \frac{2c_A^2}{\gamma}, \\[4pt]
  \sigma_{BB} &= 4a_B^2 + \frac{2c_B^2}{\gamma}, \\[4pt]
  \sigma_{CC} &= 4, \\[4pt]
  \sigma_{AB} &= (a_A + a_B)^2 + \frac{c_A^2 + c_B^2}{\gamma}, \\[4pt]
  \sigma_{AC} &= (1 + a_A)^2 + \frac{c_A^2}{\gamma}, \\[4pt]
  \sigma_{BC} &= (1 + a_B)^2 + \frac{c_B^2}{\gamma}.
\end{aligned}
\end{equation}

Note that $\sigma_{CC} = 4$ is time-independent: the null-space block
is a frozen GOE contribution at all times.

\subsection{The $3\times 3$ Dyson equation}

The partial Stieltjes transforms $m_A(z)$, $m_B(z)$, $m_C(z)$
satisfy:
\begin{equation}
\label{eq:dyson-3x3}
\begin{cases}
  \displaystyle \frac{1}{m_A(z)}
  = -z
  - \frac{\gamma}{2}\,\sigma_{AA}\,m_A
  - \frac{\gamma}{2}\,\sigma_{AB}\,m_B
  - (1-\gamma)\,\sigma_{AC}\,m_C, \\[10pt]
  \displaystyle \frac{1}{m_B(z)}
  = -z
  - \frac{\gamma}{2}\,\sigma_{AB}\,m_A
  - \frac{\gamma}{2}\,\sigma_{BB}\,m_B
  - (1-\gamma)\,\sigma_{BC}\,m_C, \\[10pt]
  \displaystyle \frac{1}{m_C(z)}
  = -z
  - \frac{\gamma}{2}\,\sigma_{AC}\,m_A
  - \frac{\gamma}{2}\,\sigma_{BC}\,m_B
  - (1-\gamma)\,\sigma_{CC}\,m_C,
\end{cases}
\end{equation}
with the Stieltjes-branch conditions $\Im\,m_p(z) < 0$ for
$\Im\,z > 0$ and $m_p(z) \sim -1/z$ as $|z| \to \infty$.

The global Stieltjes transform and spectral density are
\begin{equation}
\label{eq:m-3block}
  m(z) = \frac{\gamma}{2}\,m_A(z)
       + \frac{\gamma}{2}\,m_B(z)
       + (1-\gamma)\,m_C(z),
  \qquad
  \rho_S(\lambda)
  = -\frac{1}{\pi}\lim_{\eta\downarrow 0}\Im\,m(\lambda + i\eta).
\end{equation}

\subsection{Effect of the null space on the transient BBP window}

Compared to the oversampled regime $\gamma > 1$, the null-space
block C introduces a frozen GOE component of variance $\sigma_{CC}=4$
that competes with the signal at all times.  This has two
consequences:
\begin{enumerate}
  \item The bulk spectrum is generically wider (since the frozen
    initialisation noise never decays), making it harder for an
    outlier to emerge.
  \item The null-space directions do not contribute to $\psi$ or
    $\chi$ (since they carry no learned signal), but they do
    contribute to $\phi$ (through $m_C$), which enters the
    right-hand side of the outlier equation.
\end{enumerate}

As a result, the transient BBP window $[t_1, t_2]$ is generically
\emph{narrower} for $\gamma < 1$ than for $\gamma > 1$: more
directions carry permanent noise without contributing to learning,
raising the effective detection threshold $\theta_c(t)$.

\section{Outlier equation and BBP transition}
\label{app:BBP}

\subsection{Weighted finite-rank teacher perturbation}

In the covariance eigenbasis, the signal contribution to the symmetrized matrix is
\[
\theta\bigl(\widetilde v w_t^\top + w_t\widetilde v^\top\bigr),
\qquad
w_t=(I-e^{-t\Lambda^2})\widetilde v.
\]
This perturbation is rank two unless $w_t$ is proportional to $\widetilde v$, which happens only in the isotropic case. Thus the standard rank-one BBP formula does not directly apply at finite time \cite{benaych2011eigenvalues}.

\subsection{Rank-two determinant and outlier condition}
\label{subsec:rank2det}

To simplify notation, we drop the tildes and the time subscript and write the perturbed matrix as
\[
S^\theta = S + \theta(wv^\top + vw^\top).
\]
Introduce
\[
U=[w,v]\in\mathbb R^{N\times 2},
\qquad
V=[v,w]\in\mathbb R^{N\times 2},
\]
and let
\[
R(z)=(zI-S)^{-1}.
\]
By the matrix determinant lemma,
\[
\det(zI-S^\theta)
=
\det(zI-S)\,\det\!\bigl(I_2-\theta V^\top R(z)U\bigr).
\]

The $2\times2$ Schur complement is
\[
M(z)
=
I_2-\theta
\begin{pmatrix}
v^\top R(z)w & v^\top R(z)v\\
w^\top R(z)w & w^\top R(z)v
\end{pmatrix}
=
\begin{pmatrix}
1-\theta\psi(z) & -\theta\phi_t(z)\\
-\theta\chi(z) & 1-\theta\psi(z)
\end{pmatrix},
\]
where
\begin{equation}
\label{eq:qforms}
\phi_t(z)=v^\top R(z)v,
\qquad
\psi_t(z)=v^\top R(z)w,
\qquad
\chi_t(z)=w^\top R(z)w.
\end{equation}

\paragraph{Concentration of quadratic forms}
Since $v$ is asymptotically isotropic in the eigenbasis of the noise matrix and the two blocks have deterministic weights, the quadratic forms concentrate according to
\begin{align}
\phi_t(z) &\xrightarrow[N\to\infty]{} \alpha\,m_{A,t}(z) + (1-\alpha)\,m_{B,t}(z), \notag\\
\psi_t(z) &\xrightarrow[N\to\infty]{} \alpha\,f_{A,t}\,m_{A,t}(z) + (1-\alpha)\,f_{B,t}\,m_{B,t}(z), \label{eq:quad_form}\\
\chi_t(z) &\xrightarrow[N\to\infty]{} \alpha\,f_{A,t}^2\,m_{A,t}(z) + (1-\alpha)\,f_{B,t}^2\,m_{B,t}(z), \notag
\end{align}
with
\[
f_{A,t}=1-e^{- \lambda_+^2t},
\qquad
f_{B,t}=1-e^{-\lambda_-^2 t}.
\]

\paragraph{Outlier equation}
Eigenvalues of $S^\theta$ outside the bulk correspond to real zeros of $\det M(z)$. Computing the determinant gives
\begin{equation}
\label{eq:outlier_det}
\det M(z)
=
\bigl(1-\theta\psi_t(z)\bigr)^2 - \theta^2\phi_t(z)\chi_t(z)=0,
\end{equation}
which is the outlier equation used in Proposition~\ref{prop:outlier}.

\paragraph{Eigenvector overlap with the teacher}
Given a simple outlier root $\xi_\theta$ of~\eqref{eq:outlier_det}, the projections of the associated eigenvector $u^\star$ of $S^\theta$ onto $v$ and $w$ concentrate to deterministic limits, while the component orthogonal to $\mathrm{span}(v,w)$ is delocalized over the bulk. The squared overlap $q_t=(u^\star\cdot v)^2$ with the teacher direction is obtained, via the Woodbury identity, as the residue of the resolvent $v^\top(zI-S^\theta)^{-1}v$ at $z=\xi_\theta$; this reduces to differentiating $\det M(z)$ at $\xi_\theta$ and yields a rational function of $\phi_t,\psi_t,\chi_t$ and their derivatives at $\xi_\theta$. We omit the explicit expression, which is not used in the main text beyond the qualitative properties of $q_t$ discussed in Section~\ref{subsec:overlap}.

\section{Fixed-point algorithm for the Dyson equation}
\label{app:fixedpoint}

Proposition~\ref{prop:dyson} characterizes $(m_A(z),m_B(z))$ as the unique physical solution of \eqref{eq:dyson} with $\Im m_A(z),\Im m_B(z)<0$ for $\Im z>0$. Numerically, we solve this system by fixed-point iteration, initializing $m_A^{(0)}=m_B^{(0)}=-1/z$ and iterating
\begin{equation}
\label{eq:fp_iter}
m_A^{(n+1)}
=
\frac{-1}{z+\alpha\sigma_{AA}m_A^{(n)}+(1-\alpha)\sigma_{AB}m_B^{(n)}},
\qquad
m_B^{(n+1)}
=
\frac{-1}{z+\alpha\sigma_{AB}m_A^{(n)}+(1-\alpha)\sigma_{BB}m_B^{(n)}}.
\end{equation}

\paragraph{Numerical implementation}
For $z=x+i\varepsilon$ with $\varepsilon>0$, the iteration preserves the physical branch and converges robustly in our experiments. In practice we use $\varepsilon=10^{-2}$ and stop when the $\ell^\infty$ change between successive iterates is below $10^{-10}$.

\paragraph{Bulk edge and phase boundary}
The upper edge $L_+$ is estimated from the rightmost point on a fine real grid where the numerical density exceeds a small threshold. For each fixed $\theta$, we then evaluate
\[
F(\theta)
=
\bigl(1-\theta\psi(L_+^+)\bigr)^2 - \theta^2\phi(L_+^+)\chi(L_+^+)
\]
Doing this on a coarse grid in $t$ and refining each sign change by bisection yields the transition times and the phase diagrams shown in the main text.

\section{Full-rank power-law spectrum: numerical validation}
\label{app:fullrank}

The two-block covariance model of Section~\ref{sec:twoblock} is analytically
tractable because it reduces the Dyson system to two equations.  Here we check
numerically that the transient BBP phenomenology is not an artefact of that
simplification, by replacing the two-block spectrum with a continuous power-law
distribution.

\paragraph{Setup}
We draw the $N$ singular values of the input matrix $X$ independently from the
truncated power-law density
\[
  p(\lambda) \propto \lambda^{-\beta}, \qquad \lambda \in [\lambda_{\min},\lambda_{\max}],
\]
with $\beta = 1.5$, $\lambda_{\min} = 0.1$, $\lambda_{\max} = 5$, and $N=400$.
The rest of the setup follows Section~\ref{sec:model} exactly: the teacher is
rank-one with signal amplitude $\theta$, the initialisation is GOE at scale
$1/\sqrt{N}$, and the noise matrix has i.i.d.\ $\mathcal{N}(0,1/N)$ entries.
Working in the eigenbasis of $XX^\top$ (with eigenvalues $\mu_i = \lambda_i^2$),
the closed-form gradient-flow solution~\eqref{eq:At} becomes
\[
  A_t = Q\!\left(e^{-t\mu}\cdot G \;+\; \frac{1-e^{-t\mu}}{\mu}\cdot W_Z\right),
\]
where $Q$ is the eigenvector matrix of $XX^\top$, $\mu =
\mathrm{diag}(\mu_1,\dots,\mu_N)$, $G = Q^\top A_{\mathrm{init}}$, and $W_Z =
Q^\top(X Y_0^\top)$ encodes the teacher-plus-noise contribution.  Simulations
are run over 8 independent realisations at each of 85 logarithmically spaced
times $t \in [0.05, 1000]$ and averaged.

\paragraph{Results}
Figure~\ref{fig:fullrank_overlap} shows the squared overlap
$q_t = (u_t^\star \cdot v)^2$ between the top eigenvector of
$S_t = A_t + A_t^\top$ and the teacher direction $v$, for seven values of the
signal amplitude $\theta$.

\begin{figure}[ht]
  \centering
  \includegraphics[width=0.78\linewidth]{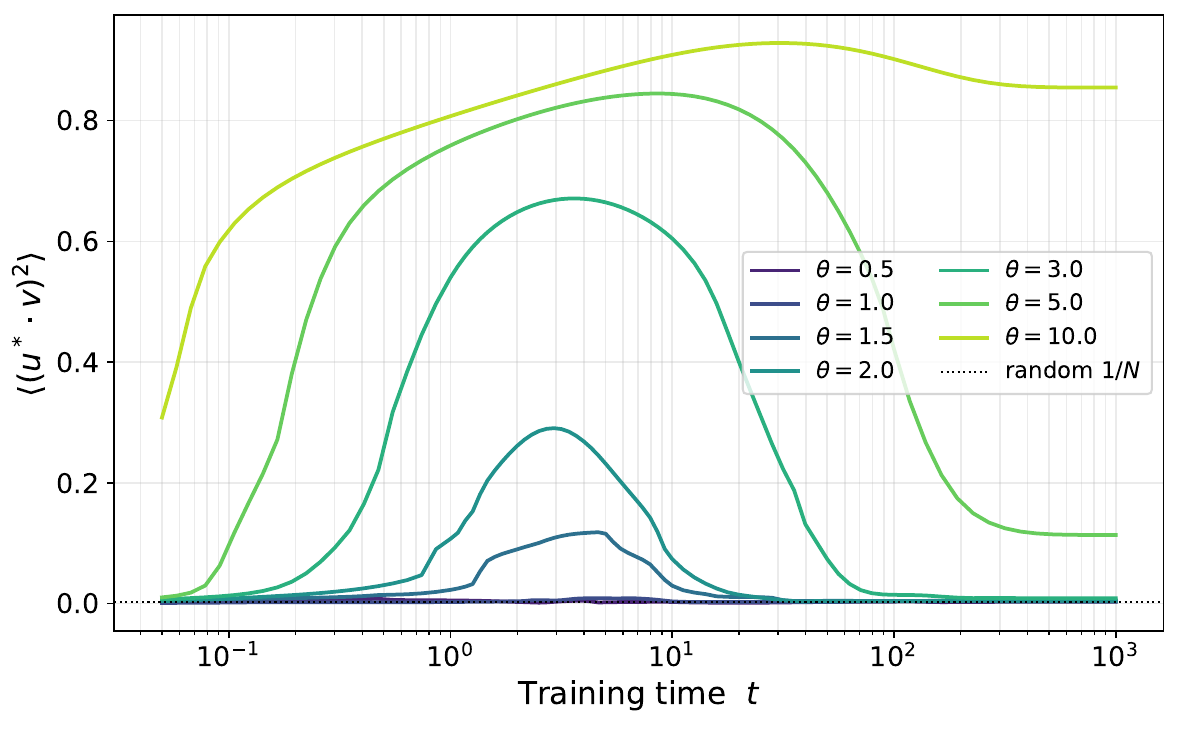}
  \caption{\small Teacher-direction overlap $q_t = (u_t^\star\cdot v)^2$ as a
  function of training time $t$ for a full-rank power-law covariance spectrum
  ($\beta=1.5$, $\lambda\in[0.1,5]$, $N=400$, 8 realisations averaged).  The
  dotted horizontal line marks the random baseline $1/N$.  Weak signals ($\theta
  \lesssim 1$) never produce a detectable outlier.  Intermediate signals show a
  clear transient: the overlap rises, peaks, and returns to baseline,
  reproducing the early-stopping regime of Theorem~\ref{thm:three_phases}.
  Strong signals produce a persistent outlier with a sustained high overlap.}
  \label{fig:fullrank_overlap}
\end{figure}

The three qualitative regimes of Theorem~\ref{thm:three_phases} are all
visible: for $\theta \lesssim 1$ the overlap never exceeds the random baseline
$1/N$ (weak-signal regime); for intermediate $\theta$ the overlap rises,
attains an interior maximum, and decays back to baseline (early-stopping
regime); and for large $\theta$ the overlap rises and remains elevated
throughout training (strong-signal regime).  The peak overlap increases with
$\theta$ and the transient window widens, consistent with the phase-diagram
picture of Section~\ref{subsec:phase_diagrams}.  These observations confirm
that the transient BBP mechanism is robust to the specific form of the
covariance spectrum and is not a peculiarity of the two-block simplification.

\end{document}